\documentclass[10pt,twocolumn,letterpaper]{article}

\usepackage{iccv}
\usepackage{times}
\usepackage{epsfig}
\usepackage{graphicx}
\usepackage{amsmath}
\usepackage{amssymb}
\usepackage{booktabs} 

\usepackage{longtable}
\usepackage{amsmath,epsfig,amssymb,graphicx,bm}
\usepackage{subcaption}
\usepackage{epstopdf}
\usepackage{listings}
\usepackage[heightadjust=all]{floatrow}
\usepackage{mathtools}
\usepackage[symbol]{footmisc}
\usepackage{amsthm}
\usepackage[heightadjust=all]{floatrow}
\newfloatcommand{capbtabbox}{table}[][\FBwidth]
\usepackage{subcaption}

\theoremstyle{plain}

\theoremstyle{definition}
\newtheorem{definition}{Definition}[section]

\theoremstyle{definition}
\newtheorem{example}{Example}[section]

\theoremstyle{plain}
\newtheorem{lemma}{Lemma}[section]

\theoremstyle{plain}
\newtheorem{corollary}{Corollary}[section]

\theoremstyle{definition}
\newtheorem{proposition}{Proposition}[section]

\usepackage{tikz, pgfplots}
\pgfplotsset{compat=1.14}
\usepgfplotslibrary{groupplots}
\usepackage[export]{adjustbox}
\usetikzlibrary{positioning, arrows.meta, backgrounds}

\definecolor{main1}{RGB}{230, 30, 100} 
\definecolor{main2}{RGB}{0, 140, 210} 
\definecolor{main3}{RGB}{240, 170, 0} 
\definecolor{main4}{RGB}{67, 176, 71} 
\definecolor{main5}{RGB}{136, 86, 167} 

\newcommand{\addplotErrorBar}[4]{
    \addplot [
        color = #4,
        line width = 4pt,
        mark = none,
        error bars/y dir=both,
        error bars/y explicit,
        error bars/error bar style={line width = 2pt}
        ] table [
        x = #1,
        y = #2median,
        y error minus = #2lower,
        y error plus = #2upper,
        col sep = comma]
        {runtime-figures/fig-analytical-runtime/#1j#3.csv};
}

\newcommand{\addplotLoss}[4]{
    \addplot [
        color = #4,
        ultra thick,
        mark = none,
        ] table [
        x = #2,
        y = #3,
        col sep = comma]
        {loss-figures/#1_loss.csv};
}

\definecolor{mncolor}{RGB}{255,50,00}

\newcommand\blfootnote[1]{%
  \begingroup
  \renewcommand\thefootnote{}\footnote{#1}%
  \addtocounter{footnote}{-1}%
  \endgroup
}


\PassOptionsToPackage{hyphens}{url}\usepackage[pagebackref=true,breaklinks=true,letterpaper=true,colorlinks,bookmarks=false]{hyperref}

\iccvfinalcopy 


\ificcvfinal\pagestyle{empty}\fi
\begin{document}

\title{DDSL: Deep Differentiable Simplex Layer for Learning Geometric Signals}

\renewcommand*{\thefootnote}{\fnsymbol{footnote}}
\author{Chiyu ``Max" Jiang\footnote[1]{\label{ec}Equal contributions}~~\textsuperscript{1}\\
\and
Dana Lansigan\footnotemark[1]~~\textsuperscript{1}\\
\and
Philip Marcus\textsuperscript{1}\\
\and
Matthias Nie{\ss}ner\textsuperscript{2}\\
\and
\textsuperscript{1}UC Berkeley \\
\and
\textsuperscript{2}Technical University of Munich
}
\renewcommand*{\thefootnote}{\arabic{footnote}}
\maketitle
\ificcvfinal\thispagestyle{empty}\fi

\begin{abstract}
We present a Deep Differentiable Simplex Layer (DDSL) for neural networks for geometric deep learning. The DDSL is a differentiable layer compatible with deep neural networks for bridging simplex mesh-based geometry representations (point clouds, line mesh, triangular mesh, tetrahedral mesh) with raster images (e.g., 2D/3D grids). The DDSL uses Non-Uniform Fourier Transform (NUFT) to perform differentiable, efficient, anti-aliased rasterization of simplex-based signals. We present a complete theoretical framework for the process as well as an efficient backpropagation algorithm. Compared to previous differentiable renderers and rasterizers, the DDSL generalizes to arbitrary simplex degrees and dimensions. In particular, we explore its applications to 2D shapes and illustrate two applications of this method: (1) mesh editing and optimization guided by neural network outputs, and (2) using DDSL for a differentiable rasterization loss to facilitate end-to-end training of polygon generators. We are able to validate the effectiveness of gradient-based shape optimization with the example of airfoil optimization, and using the differentiable rasterization loss to facilitate end-to-end training, we surpass state of the art for polygonal image segmentation given ground-truth bounding boxes.
\end{abstract}

\section{Introduction}
\label{sec:intro}
\blfootnote{\textsuperscript{*} Equal contributions}
The simplicial complex (i.e., simplex mesh) is a flexible and general representation for non-uniform geometric signals. Various commonly-used geometric representations, including point clouds, wire-frames, polygons, triangular mesh, tetrahedral mesh etc., are examples of simplicial complexes. Leveraging deep learning architectures for such non-uniform geometric signals has been of increasing interest, and varied methodologies and architectures have been presented to deal with varied representations \cite{bronstein2017geometric}. 

\begin{figure}
    \centering
    \input{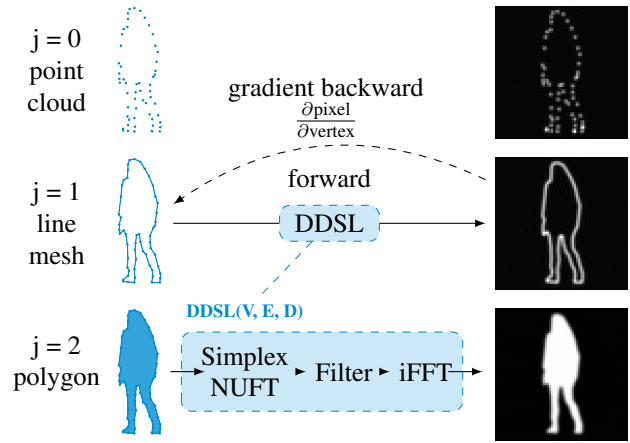}
    \vspace{-0.7cm}
    \caption{A schematic of the DDSL layer with 2D simplex meshes. The DDSL algorithm is general for handling simplex meshes of arbitrary dimensions and simplex degrees. The input to DDSL is a simplex mesh described by three matrices: float matrix V for vertex coordinates, uint matrix E for simplex connectivity, and float matrix D for per-simplex density (constant density of 1 in the example above). A raster image of arbitrary resolution can be produced. The gradient of per-pixel intensity with respect to each spatial coordinate in V can be computed analytically within the DDSL layer.}
    \label{fig:ddsl_schematic}
\end{figure}

In this study, we propose a Deep Differentiable Simplex Layer (DDSL), which performs differentiable rasterization of arbitrary simplex mesh-based geometric signals. The DDSL is based upon simplex Non-Uniform Fourier Transform (NUFT) \cite{jiang2018convolutional} for the forward-pass, which is highly generalizable across arbitrary topologies. Furthermore, we find the general differential form of the simplex NUFT, allowing for an efficient backward pass. Our work differs from previous work in the literature on differentiable rendering in two major ways. First, our network is generalizable across arbitrary simplex degrees and dimensions, making it a unified framework for a range of geometric representations. Second, while other differentiable renderers are specifically posed for projective-rendering by projecting 3D meshes to 2D grids, the DDSL is capable of in-situ rasterization in the original dimension. Building on the differentiable nature of the rasterizer, we explore two unique use cases. First, using the differentiablity of the DDSL, we can utilize Convolutional Neural Network (CNN) based deep learning models as surrogate models of physical properties for shape optimization, which is useful in a range of engineering disciplines. Secondly, using the DDSL as a neural network layer, we can formulate a differentiable rasterization loss that allows for end-to-end generation of shapes using a direct supervised approach, which can be useful in a range of computer vision problems.

As an example of the two use cases, we perform three experiments. First, to validate the effectiveness of gradient propagation through the layer, we illustrate with the toy problem of MNIST shape optimization, where we can use gradients propagated through the neural network and DDSL to manipulate and transform the input polygon mesh into a target digit (Sec. \ref{ssec:optim}). Next, to further illustrate potential applications of neural shape optimization enabled by the DDSL, we investigate the classic engineering problem of airfoil optimization and show that the shape optimization pipeline effectively manipulates the input shape into a desired lift-drag ratio (Sec. \ref{ssec:optim}). Finally, to illustrate the effectiveness of the differentiable rasterization loss, we train a polygon generating neural network end-to-end with direct supervision to generate polygonal segmentation masks for image segmentation (Sec. \ref{ssec:seg}). With the novel rasterization loss, we surpass state-of-the-art in the polygon segmentation task, with a much simpler network architecture and training scheme.

In summary, we contribute the following:
\begin{itemize}
    \itemsep0em
    \item We propose the DDSL, which is a differentiable rasterizer for arbitrary simplex-mesh based geometries. Its differentiable nature allows for its effective integration in deep neural networks.
    \item We show that the DDSL effectively facilitates shape optimization for engineering applications such as aerodynamic optimzation of airfoils, using neural networks as surrogate models.
    \item We show that the DDSL can be used to produce a differentiable rasterization loss, which can be used to create direct supervision to facilitate end-to-end training of  shape generators, with applications in polygonal segmentation mask generation.
    \item We develop and release code for effectively integrating the DDSL into deep neural networks\footnote{Code available: \scalebox{1.0}[1.0]{\footnotesize \url{https://github.com/maxjiang93/DDSL}}}, with compelling computational performance benchmarks.
\end{itemize}

\section{Related Work}\label{sec:bg}
We present a brief overview of geometric representations for deep learning, various related differentiable renderers, and related work in the space of our two exemplary applications.

\paragraph{Geometric Representations for Deep Learning} In general, there are two classes of geometric representations, either in its native form of simplex meshes, or in a raster form which can be efficiently processed with grid-based network architectures such as CNNs. As simplex meshes come in various forms and dimensions (point clouds, meshes etc.), there is a vast body of literature for different geometric signals of different simplex degrees and dimensions. For example, PointNets have been specially designed for point clouds \cite{qi2017pointnet, qi2017pointnet++}, various algorithms perform convolutions natively on the mesh manifold, \cite{jiang2018spherical, huang2018texturenet, boscaini2016learning}, the graph \cite{defferrard2016convolutional, kipf2016semi, yi2017syncspeccnn} etc. 

Grid-based algorithms on the other hand require the rasterization of a simplex-mesh based geometric signal for further processing by CNNs. Examples of such include binary-voxel based algorithms \cite{maturana2015voxnet, wu2016learning}, Truncated Signed-Distance Function (TSDF) based algorithms \cite{dai2017complete, zeng20173dmatch, song2017semantic, dai2018scancomplete}, multi-view image based algorithms \cite{su2015multi, kanezaki2018rotationnet}, and hybrids \cite{kalogerakis20173d, dai20183dmv}. Compared to deep learning methods that directly perform convolutions on the simplex mesh, grid-based methods are more generalizable across shape topologies and computationally easier to implement, since it leverages highly efficient tensor operators such as 2D/3D convolution kernels for rasterized data. However, conventional voxelization methods are not differentiable with respect to the input mesh, and differentiable rasterizers have been proposed to close the gap between simplex and grid representations.

\paragraph{Differentiable Rasterization in Deep Learning} Recently, a series differentiable projective renderers have been proposed. \cite{loper2014opendr} proposed an approximate differentiable rasterizer for inverse graphics. \cite{kato2018neural} proposed a deep neural renderer that uses linear approximations for the gradients of the pixel intensity with respect to the vertex positions. \cite{li2018differentiable} introduced a differentiable ray-tracer for differentiability of additional rendering effects. Very recently, \cite{liu2019soft} proposed a differentiable rasterizer that approximates rendering derivatives with soft boundaries. Various studies in face mesh reconstruction applications \cite{genova2018unsupervised, tewari2018self, tewari2017mofa, richardson2017learning} and general mesh reconstruction tasks \cite{kanazawa2018learning, kundu20183d} utilize some form of differentiable rasterization to facilitate gradient flows in neural networks. 

\paragraph{Shape Optimization}
Shape optimization is essential in a broad range of engineering fields, including aerodynamic, mechanical, structural, and architectural designs. Traditionally, shape optimization algorithms couple gradient-based or gradient-free optimizers (e.g., genetic algorithms, simulated annealing) with physics simulators, e.g., Computational Fluid Dynamics (CFD) and multiphysics software for evaluation. For aerodynamic shape optimization, the adjoint method has been used for gradient-based optimizations with sensitivities acquired from physics simulators \cite{pironneau1974optimum, jameson1998optimum}. Recently, machine learning algorithms such as multilayer perceptrons have been used as surrogate models for the response surface to speed up evaluation and optimization \cite{khurana2009airfoil, lundberg2015automated}. More recently, CNNs have been used for the evaluation of aerodynamic properties \cite{zhang2018application}, and gradient-based optimization methods coupled with CNNs have been explored \cite{hennigh2017automated}. However, direct manipulation of input mesh has not been achieved due to the lack of in-situ differentiable rasterization of polygons and 3D meshes.

\paragraph{Image Segmentation with Polygon Masks}
Image segmentation is a central task in computer vision, and has been thoroughly studied. Much of the work in the image segmentation literature creates pixel-level masks \cite{long2015fully, ronneberger2015u, wang2018understanding, he2017mask, dai2016instance, li2017fully}. However, more recently, to address the need of assisting human annotators to create ground-truth segmentation labels, new network architectures such as PolygonRNN \cite{castrejon2017annotating} and PolygonRNN++ \cite{acuna2018efficient} have been proposed for creating polygonal segmentation masks given ground-truth bounding boxes. Our work targets this application to explore a more effective and efficient polygon generating network using our DDSL-enabled rasterization loss.

\section{Method}\label{sec:method}

\begin{table}[]
    \centering
    \begin{tabular}{c p{14em}}
        \toprule
        Notation &  Description \\
        \midrule
        $d$ & Dimension of Euclidean space $\mathbb{R}^d$ \\
        $j$ & Degree of simplex. Point $j=0$, Line $j=1$, Tri. $j=2$, Tet. $j=3$ \\
        $n, N$ & Index of the $n$-th element among a total of $N$ elements \\
        $\Omega_n^j$ & Domain of $n$-th element of order $j$ \\
        $\bm{x}$ & Cartesian space coordinate vector. $\bm{x} = (x, y, z)$ \\
        $\bm{k}$ & Spectral domain coordinate vector. $\bm{k} = (u, v, w)$ \\
        $p$ & Index of a point in a simplex element. $p\in\mathbb{N}$, $p\leq j+1$\\
        $i$ & Imaginary number unit\\
        \bottomrule
    \end{tabular}
    \caption{List of math symbols in our method.}
    \label{tab:notation}
\end{table}

\subsection{DDSL Overview}\label{ssec:ddsl}
A schematic of the DDSL layer is presented in Fig. \ref{fig:ddsl_schematic}. The DDSL layer consists of three consecutive mathematical operations, first computing the Fourier transform of the simplicial complex by uniformly sampling it in the spectral domain, followed by a spectral filtering step by multiplying the spectral signal with a Gaussian filter to eliminate ringing effects. Lastly, we use the inverse Fourier Transform (iFFT) to acquire the physical raster image corresponding to the input. Since the forward and backward methods of the filtering step (an element-wise product) and iFFT are well known, we focus our analysis on the simplex NUFT, which we derive and detail below.

\subsection{Mathematical Description}\label{ssec:theory}
We represent discrete geometric signals as weighted simplicial complexes. We provide the following definitions for a $j$-simplex and a $j$-simplex mesh: 

\begin{definition}[$j$-simplex]
A \textit{simplex} is the generalization of the two-dimensional triangle in other dimensions. The $j$-simplex determined by $j+1$ affinely independent points $v_0,\dots,v_j\in\mathbb{R}^n$ is
\begin{align}
    C&=\textbf{conv}\{v_0,\dots,v_j\}\nonumber\\
    &=\{\theta_0 v_0+\dots+\theta_j v_j\ |\ \bm{\theta} \succeq 0,\ \bm{1}^T\bm{\theta}=1\}
\end{align}
where $\bm{1}$ is the vector with all entries one.
\end{definition}

\begin{definition}[$j$-simplex mesh]
A simplicial complex consisting only of $j$-simplices is a homogeneous simplicial $j$-complex, or a \textit{$j$-simplex mesh}.
\end{definition}

\begin{example}[Examples of simplices and simplex meshes]
A $0$-simplex is a point, a $1$-simplex is a line, a $2$-simplex is a triangle, and a $3$-simplex is a tetrahedron. The $0$-, $1$-, $2$-, and $3$-simplicial complexes are the point cloud and linear, triangular, and tetrahedral meshes, respectively.
\end{example}

\begin{definition}[Functions over a $j$-simplex element and a $j$-simplex mesh]
The Piecewise-Constant Function (PCF) over a $j$-simplex mesh consisting of $N$ simplices is the superposition of the density functions $f_n^j(\bm{x})$ for each $j$-simplex with domain $\Omega_n^j$ and signal density $\rho_n$:
\begin{align}
    f_n^j(\bm{x})=
    \begin{cases}
    \rho_n, \bm{x}\in\Omega_n^j \\
    0, \bm{x}\notin\Omega_n^j
    \end{cases},
    \quad
    f^j(\bm{x})=\sum_{n=1}^N f_n^j(\bm{x})
\end{align}
\end{definition}

For the forward pass, we use the NUFT of a PCF over a $j$-simplex mesh.

\begin{proposition}[Forward pass]
The NUFT of a PCF over a simplex in a mesh is
\begin{equation}
    F_n^j(\bm{k})=\rho_ni^j\gamma_n^jS
    \label{eq:Fnj}
\end{equation}
\begin{equation}
    S:=\sum_{t=1}^{j+1}\frac{e^{-i\sigma_t}}{\prod_{l=1,l\neq{t}}^{j+1}(\sigma_t-\sigma_i)}, \quad \sigma_t:=\bm{k} \cdot \bm{x}_t
\end{equation}
where $\gamma_n^j$ is the content distortion factor, which is the ratio between the simplex content and the unit orthogonal simplex content. The simplex content $C_n^j$ is computed using the Cayley-Menger determinant:
\begin{align}
    C_n^j&=\sqrt{\frac{(-1)^{j+1}}{2^j(j!)^2}det(\hat{B}_n^j)}\\
    \hat{B}_n^j&:=\begin{bmatrix}0 & 1 & 1 & 1 & \dots\\
    1 & 0 & d_{12}^2 & d_{13}^2 & \dots \\
    1 & d_{21}^2 & 0 & d_{23}^2 &  \dots \\
    1 & d_{31}^2 & d_{32}^2 & 0 &  \dots \\
    \vdots & \vdots & \vdots & \vdots & 
    \end{bmatrix}
\end{align}
where each element $d_{st}^2$ of $\hat{B}_n^j$ is the squared distance between points $s$ and $t$. The content of the unit orthogonal simplex $C_I^j$ is $1/j!$, so the content distortion factor is
\begin{equation}
    \gamma_n^j=\frac{C_n^j}{C_I^j}=j!C_n^j
\end{equation}
 
\begin{figure*}\BottomFloatBoxes
\begin{floatrow}
\ffigbox[.7\textwidth]{%
    \input{shape-optimization-figure/optim_schematic.tex}
}{%
  \caption{Schematic of deep learning model driven shape optimization pipeline.}
  \label{fig:dlarch1}
}
\ffigbox[.26\textwidth]{%
\begin{tikzpicture}[node distance=1em]
\tikzset{>=latex}
\tikzstyle{wcirc} = [circle, draw=black, fill=white, inner sep=0pt, minimum size=.5em]
\tikzstyle{bcirc} = [circle, draw=black, fill=black, inner sep=0pt, minimum size=.5em]
\tikzstyle{arrow} = [-{Latex[width=3, length=4]}, line width=0.3]
\node [wcirc] (w0) at (0,0) {}; 
\node [wcirc] (w1) at ($ (w0.center) + (4.5em, 7.5em) $) {};
\node [wcirc] (w2) at ($ (w0.center) + (10.5em, 2em) $) {};
\draw [dashed] (w0) -- (w1) -- (w2) -- (w0);

\node [] (e0) at ($ (w0)!0.50!(w1) $) {};
\node [] (e1) at ($ (w1)!0.50!(w2) $) {};
\node [] (e2) at ($ (w2)!0.50!(w0) $) {};

\node [bcirc] (b0) at ($(e0.center) + (0.8574em, -0.5144em)$) {};
\node [bcirc] (b1) at ($(e1.center) + (1em, 1em)$) {};
\node [bcirc] (b2) at ($(e2.center) + (0.2828em, -1.9798em)$) {};

\draw (w0) -- (b0);
\draw (w0) -- (b2);
\draw (w1) -- (b0);
\draw (w1) -- (b1);
\draw (w2) -- (b1);
\draw (w2) -- (b2);

\draw [arrow] (e0.center) -- (b0);
\draw [arrow] (e1.center) -- (b1);
\draw [arrow] (e2.center) -- (b2);

\node [left=0 of e0] () {$\delta_1^{(0)}$};
\node [left=0 of e1] () {$\delta_2^{(0)}$};
\node [above=0 of e2] () {$\delta_3^{(0)}$};
\end{tikzpicture}
}{%
  \caption{Schematic for the hierarchical polygon generation process in PolygonNet. New nodes in the next hierarchy are generated by offsetting edge center in normal direction by $\delta$.}
  \label{fig:polygen}
}
\end{floatrow}
\end{figure*}

\begin{figure*}[ht!]
    \centering
    \input{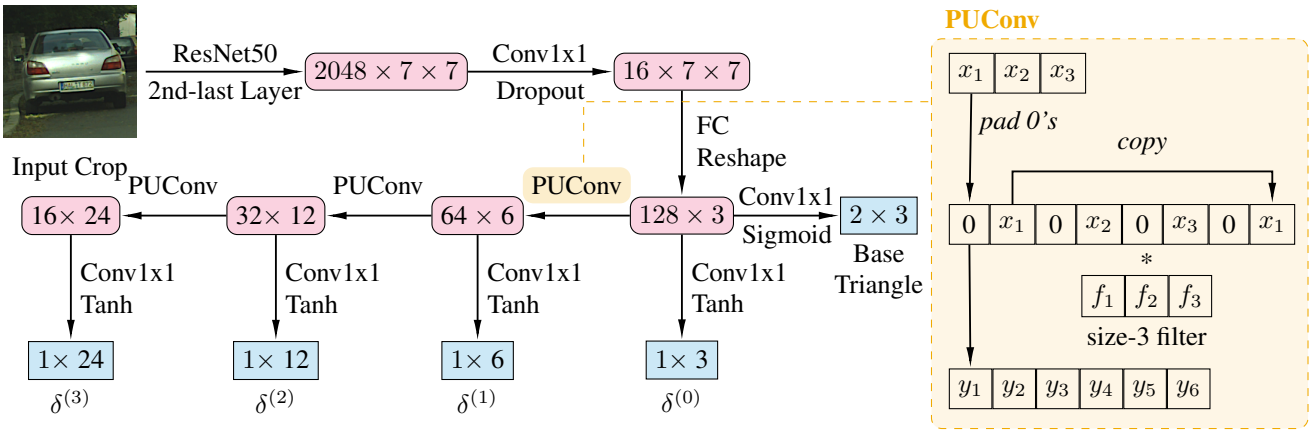}
    \caption{Schematic of the deep learning architecture for polygon segmentation  (PolygonNet). All intermediate layers are followed by BatchNorm and ReLU. A Periodic Upsampling Convolution (PUConv) is used to generate vertex offsets ($\delta$) at the consecutive level. For each level, we learn a learnable scale factor for all offsets.}
    \label{fig:dlarch2}
\end{figure*}

From the linearity of the Fourier transform, the NUFT of a PCF over an entire $j$-simplex mesh is
\begin{equation}
    F^j(\bm{k}) = \sum_{n=1}^N F_n^j(\bm{k}) = \sum_n^N \rho_n i^j\gamma_n^jS
\end{equation}

\end{proposition}

For efficient computing, we use the auxiliary node method (AuxNode), which utilizes signed content.

\begin{corollary}[AuxNode]
To compute the Fourier transform of uniform signals in $j$-polytopes represented by its watertight $(j-1)$-simplex mesh using AuxNode, Eqn. (\ref{eq:Fnj}) is modified as follows: 
\begin{align}
    F_n^j(\bm{k}) =& i^j \sum_{n'=1}^{N'_n}s_{n'}\gamma_{n'}^j\left(\frac{(-1)^{j}}{\prod_{l=1}^{j}\sigma_l}\right.\nonumber\\
    &\left.+\sum_{t=1}^{j}\frac{e^{-i\sigma_t}}{\sigma_t\prod_{l=1, l\neq t}^{j}(\sigma_t-\sigma_l)}\right)
    \label{eqn:finalan}
\end{align}
where $s_{n'}\gamma_{n'}^j$ is the signed content distortion factor for the $n'$th auxiliary $j$-simplex where $s_{n'}\in\{-1,1\}$. For practical purposes, assume that the auxiliary $j$-simplex is in $\mathbb{R}^d$ where $d=j$. The signed content distortion factor is computed using the determinant of the Jacobian matrix for parameterizing the auxiliary simplex to a unit orthogonal simplex:
\begin{align}
    s_{n'}\gamma_{n'}^j = j!\det(J) = j! \det([\bm{x}_1, \bm{x}_2, \cdots, \bm{x}_j])
    \label{eqn:cdfan}
\end{align}
\end{corollary}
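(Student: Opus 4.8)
The plan is to reduce the statement to the Forward pass proposition by writing the uniform signal on the $j$-polytope $\Omega_n^j$ as a \emph{signed} superposition of auxiliary $j$-simplices obtained by ``coning'' its watertight boundary $(j-1)$-mesh to a single apex, and then applying Eqn.~(\ref{eq:Fnj}) term by term. First I would fix the apex at the origin; a general apex $\bm{x}_0$ only multiplies the whole transform by the global phase $e^{-i\bm{k}\cdot\bm{x}_0}$, so one recovers it by translation. For each oriented boundary $(j-1)$-simplex with vertices $\bm{x}_1,\dots,\bm{x}_j$ I form the auxiliary $j$-simplex $\Omega_{n'}^j:=\textbf{conv}\{\bm{0},\bm{x}_1,\dots,\bm{x}_j\}\subset\mathbb{R}^d$ (taking $d=j$) and set $s_{n'}:=\mathrm{sign}\det[\bm{x}_1,\dots,\bm{x}_j]$. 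The central step is the signed-covering identity
\begin{equation}
    \chi_{\Omega_n^j}=\sum_{n'=1}^{N'_n}s_{n'}\,\chi_{\Omega_{n'}^j}\qquad\text{a.e. on }\mathbb{R}^d ,
\end{equation}
where $\chi$ denotes an indicator function.

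I would prove this identity by a degree / crossing-number argument. The right-hand side is locally constant off the union of facets of the $\Omega_{n'}^j$ and vanishes near infinity; along a generic segment its value changes by the oriented incidence number exactly when the segment crosses a facet. The facets through $\bm{0}$ (the ``cone'' facets) are each shared by two auxiliary simplices whose signs are opposite --- this is precisely where the hypothesis that the $(j-1)$-mesh is watertight and consistently oriented is used --- so those crossings cancel, and the only surviving jumps happen across $\partial\Omega_n^j$, where they reconstruct $\chi_{\Omega_n^j}$. Equivalently, $\sum_{n'}s_{n'}[\Omega_{n'}^j]$ is a $j$-chain whose boundary is the $(j-1)$-cycle $[\partial\Omega_n^j]$ (the cone facets cancel in pairs), and such a filling is unique in $\mathbb{R}^d$.

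Granting the identity, linearity of the NUFT (the same property already invoked for $F^j=\sum_nF_n^j$) gives $F_n^j(\bm{k})=\sum_{n'}s_{n'}\,\widehat{\chi}_{\Omega_{n'}^j}(\bm{k})$. Applying the Forward pass proposition to $\Omega_{n'}^j$, whose $j+1$ vertices I re-index as $0,1,\dots,j$ with vertex $0$ the apex: writing $\sigma_0:=\bm{k}\cdot\bm{0}=0$ and $\sigma_t:=\bm{k}\cdot\bm{x}_t$ for $t\ge1$, Eqn.~(\ref{eq:Fnj}) produces the prefactor $i^j(s_{n'}\gamma_{n'}^j)$ --- signed content replacing unsigned --- times $S=\sum_{t=0}^{j}e^{-i\sigma_t}/\prod_{l=0,\,l\neq t}^{j}(\sigma_t-\sigma_l)$. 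Splitting off $t=0$ gives $1/\prod_{l=1}^{j}(0-\sigma_l)=(-1)^{j}/\prod_{l=1}^{j}\sigma_l$, the first term of Eqn.~(\ref{eqn:finalan}); for $t\ge1$ the factor $(\sigma_t-\sigma_0)=\sigma_t$ separates from the product, leaving $e^{-i\sigma_t}/\bigl(\sigma_t\prod_{l=1,\,l\neq t}^{j}(\sigma_t-\sigma_l)\bigr)$, the $t$-sum of Eqn.~(\ref{eqn:finalan}). Reinstating the $n'$-sum and the overall $i^j$ yields Eqn.~(\ref{eqn:finalan}). For the distortion factor, note that $\Omega_{n'}^j$ is full-dimensional in $\mathbb{R}^j$, so the affine map carrying the unit orthogonal simplex $\textbf{conv}\{\bm{0},\bm{e}_1,\dots,\bm{e}_j\}$ onto it has linear part $J=[\bm{x}_1\ \cdots\ \bm{x}_j]$; the signed ratio of contents is therefore proportional to $\det J$ with the same normalization by $C_I^j=1/j!$ that defines $\gamma$, which is Eqn.~(\ref{eqn:cdfan}), and it simultaneously produces $s_{n'}=\mathrm{sign}\det J$, bypassing the Cayley--Menger evaluation.

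The main obstacle is the first step: turning the informal fan-and-cancellation picture into a rigorous proof valid for non-convex and multiply-connected polytopes and for every $j$, which is exactly the place where watertightness and consistent orientation of the boundary mesh do the work (ensuring the interior cone facets cancel in pairs). Once the signed-covering identity is established, the remainder --- linearity, specializing Eqn.~(\ref{eq:Fnj}), collapsing the $\sigma_0=0$ term, and reading off the Jacobian --- is routine bookkeeping on top of the already-proved forward pass.
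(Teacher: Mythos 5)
The paper offers no in-text argument for this corollary---its ``proof'' is a pointer to the cited NUFT reference---so your write-up stands on its own; in substance it is the same auxiliary-node derivation that reference relies on, and the core of it is correct. The signed-covering identity obtained by coning the watertight, consistently oriented $(j-1)$-mesh to the origin (with interior cone facets cancelling in oppositely signed pairs, so the identity holds a.e., which suffices since both sides are integrable), linearity of the NUFT, and the specialization of Eqn.~(\ref{eq:Fnj}) to an apex with $\sigma_0=0$---splitting off the $t=0$ term as $(-1)^j/\prod_{l=1}^{j}\sigma_l$ and peeling the factor $(\sigma_t-\sigma_0)=\sigma_t$ out of the remaining terms---reproduce Eqn.~(\ref{eqn:finalan}) exactly, and the translation-by-phase reduction to an apex at the origin is fine.

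One piece of bookkeeping needs tightening: your closing sentence (``the signed ratio of contents is proportional to $\det J$ with the same normalization by $C_I^j=1/j!$'') does not actually land on Eqn.~(\ref{eqn:cdfan}) as printed. With $\gamma_{n'}^j := C_{n'}^j/C_I^j = j!\,C_{n'}^j$ and the signed content of $\mathrm{conv}\{\bm{0},\bm{x}_1,\dots,\bm{x}_j\}$ equal to $\det[\bm{x}_1,\dots,\bm{x}_j]/j!$, your construction gives $s_{n'}\gamma_{n'}^j=\det[\bm{x}_1,\dots,\bm{x}_j]$, not $j!\det(J)$; a direct check confirms this is also what Eqn.~(\ref{eqn:finalan}) requires (take the unit right triangle in $\mathbb{R}^2$: its only nondegenerate auxiliary simplex is the triangle itself, and matching the forward-pass value forces $s\gamma=\det J=1$, whereas $j!\det J=2$ would double the transform). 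So the extra $j!$ in Eqn.~(\ref{eqn:cdfan}) is inconsistent with the paper's own definition of $\gamma$; rather than asserting vaguely that your normalization ``is'' Eqn.~(\ref{eqn:cdfan}), you should state the factor you actually derive and flag the discrepancy with the printed statement. With that clarification, the argument is complete.
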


\begin{proof}
Refer to \cite{jiang2018convolutional}.
\end{proof}

For the backward pass, we derive the analytic derivative of the NUFT with respect to the vertex coordinates of a j-simplex mesh. Following from the product rule, we require the derivatives of the content distortion factor $\gamma_n^j$ and the summation term $S$ to obtain the entire derivative of $F_n^j(\bm{k})$.

\begin{lemma}[Derivative of the content distortion factor]
\label{lem:dgammadxp}
The derivative of $\gamma_n^j$ with respect to vertex coordinate $\bm{x}_p$ is
\begin{equation}
    \frac{\partial \gamma_n^j}{\partial\bm{x}_p}=\frac{(-1)^{j+1}/2^j}{\gamma_n^j}\sum_{\substack{m=1\\m\neq{p}}}^{j+1}A_{pm}\bm{D}_{pm}
    \label{eq:dgammadxp}
\end{equation}
where $\bm{D}_{pm}=2(\bm{x}_p-\bm{x}_m)$ and $A_{pm}$ is the element in the $(p+1)$th row and $(m+1)$th column of $adj(\hat{B_n^j})$.
\end{lemma}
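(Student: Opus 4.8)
The plan is to reduce the whole statement to a single determinant differentiation. First I would clear the square root in the definition of the content: combining $\gamma_n^j = j!\,C_n^j$ with the Cayley--Menger formula gives the clean identity
\[
(\gamma_n^j)^2 \;=\; (j!)^2 (C_n^j)^2 \;=\; \frac{(-1)^{j+1}}{2^j}\det(\hat B_n^j),
\]
where all factorials have cancelled. Because the $j+1$ vertices of a $j$-simplex are affinely independent, the Cayley--Menger determinant is nonzero (indeed of sign $(-1)^{j+1}$, which is exactly why $C_n^j$ is real), so $\gamma_n^j\neq 0$ and implicit differentiation in a coordinate $\bm x_p$ is legitimate:
\[
2\gamma_n^j\,\frac{\partial\gamma_n^j}{\partial\bm x_p}=\frac{(-1)^{j+1}}{2^j}\,\frac{\partial\det(\hat B_n^j)}{\partial\bm x_p}.
\]
It then remains only to compute $\partial\det(\hat B_n^j)/\partial\bm x_p$ and divide.

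For the determinant derivative I would invoke Jacobi's formula, $\partial\det(B)/\partial B_{ab} = [\mathrm{adj}(B)]_{ba}$, together with the chain rule through the squared-distance entries. The key structural observation is that perturbing $\bm x_p$ changes $\hat B_n^j$ only in the entries $d_{pm}^2 = d_{mp}^2 = \lVert \bm x_p - \bm x_m\rVert^2$ for $m\neq p$ — that is, only in row $p+1$ and column $p+1$, leaving the bordering $0$'s and $1$'s and the diagonal $d_{pp}^2=0$ untouched. Each such squared distance sits in the two symmetric positions $(p+1,m+1)$ and $(m+1,p+1)$; since $\hat B_n^j$ is symmetric, $\mathrm{adj}(\hat B_n^j)$ is symmetric, so the two cofactors coincide and both equal $A_{pm}$. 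Hence $\partial\det(\hat B_n^j)/\partial d_{pm}^2 = 2A_{pm}$, and with $\partial d_{pm}^2/\partial\bm x_p = 2(\bm x_p - \bm x_m) = \bm D_{pm}$ the chain rule gives
\[
\frac{\partial\det(\hat B_n^j)}{\partial\bm x_p}=\sum_{\substack{m=1\\m\neq p}}^{j+1} 2A_{pm}\,\bm D_{pm}.
\]
Substituting into the implicit-differentiation identity, the factor $2$ here cancels the $2\gamma_n^j$ on the left, yielding exactly Eqn.~(\ref{eq:dgammadxp}).

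The one place I expect to need care — and the only genuine subtlety — is the bookkeeping of that factor of two. One must not treat the symmetric off-diagonal pair as a single independent scalar, but sum the two cofactor contributions (equal by symmetry of $\hat B_n^j$), and simultaneously keep $\bm D_{pm}$ as $2(\bm x_p - \bm x_m)$ rather than $\bm x_p - \bm x_m$; the three separate factors of $2$ (from $2\gamma_n^j$, from the symmetric pair, and inside $\bm D_{pm}$) must be tracked so that the claimed coefficient $(-1)^{j+1}/2^j$ emerges with no spurious power of $2$. Everything else is routine: the chain rule terminates because the border row and column of $\hat B_n^j$ are constant in $\bm x_p$, and the identification of both cofactors with $A_{pm}$ is immediate from symmetry of the adjugate.
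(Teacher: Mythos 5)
Your proposal is correct and follows essentially the same route as the paper's proof: both reduce the claim to Jacobi's formula for $\partial\det(\hat B_n^j)/\partial\bm{x}_p$, exploit that only the $(p{+}1)$th row and column depend on $\bm{x}_p$, and use the symmetry of $\hat B_n^j$ and its adjugate to pick up the factor of $2$ that cancels against the $2\gamma_n^j$ from differentiating the (squared) content formula. Squaring first and differentiating implicitly, rather than differentiating the square root directly as the paper does, is only a cosmetic difference, and your explicit remark that affine independence makes the Cayley--Menger determinant nonzero (so division by $\gamma_n^j$ is legitimate) is a small welcome addition.
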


\begin{figure*}[ht!]
    \centering
    \begin{subfigure}[h]{\textwidth}
    \resizebox{\textwidth}{!}{\begin{tikzpicture}[font = \huge]
    \begin{groupplot}[
    width = 0.6\textwidth,
    group style = {
        group size = 4 by 1,
        xlabels at = edge bottom,
        ylabels at = edge left,
        xticklabels at = edge bottom,
        yticklabels at = edge left},
    xlabel = Number of Points,
    ylabel = Runtime (ms),
    ymin = 1,
    ymax = 2000,
    ymode = log,
    log basis y = {10},
    xmin = 1,
    xmax = 54,
    xtick = {10,20,30,40,50},
    xticklabels = {10,20,30,40,50},
    every outer x axis line/.append style = {black!30, ultra thick},
    every outer y axis line/.append style = {black!30, ultra thick},
    every tick label/.append style = {black!70}]
    \nextgroupplot[title = {$j=0$}]
    \addplotErrorBar{npoints}{atime}{0}{main1}
    \addplotErrorBar{npoints}{ftime}{0}{main2}
    \nextgroupplot[title = {$j=1$}]
    \addplotErrorBar{npoints}{atime}{1}{main1}
    \addplotErrorBar{npoints}{ftime}{1}{main2}
    \nextgroupplot[title = {$j=2$}]
    \addplotErrorBar{npoints}{atime}{2}{main1}
    \addplotErrorBar{npoints}{ftime}{2}{main2}
    \nextgroupplot[title = {$j=3$}]
    \addplotErrorBar{npoints}{atime}{3}{main1}
    \addplotErrorBar{npoints}{ftime}{3}{main2}
    \end{groupplot}
\end{tikzpicture}}
    \caption{}
    \end{subfigure}
    \begin{subfigure}[h]{\textwidth}
    \resizebox{\textwidth}{!}{\begin{tikzpicture}[font = \huge] 
    \begin{groupplot}[
    width = 0.6\textwidth,
    group style = {
        group size = 4 by 1,
        xlabels at = edge bottom,
        ylabels at = edge left,
        xticklabels at = edge bottom,
        yticklabels at = edge left},
    xlabel = Resolution,
    ylabel = Runtime (ms),
    ymin = 1,
    ymax = 2000,
    ymode = log,
    log basis y = {10},
    xmin = 2,
    xmax = 34,
    xtick = {4,8,16,32},
    xticklabels = {4,8,16,32},
    every outer x axis line/.append style = {black!30, ultra thick},
    every outer y axis line/.append style = {black!30, ultra thick},
    every tick label/.append style = {black!70}]
    \nextgroupplot[title = {$j=0$}]
    \addplotErrorBar{res}{atime}{0}{main1}
    \addplotErrorBar{res}{ftime}{0}{main2}
    \nextgroupplot[title = {$j=1$}]
    \addplotErrorBar{res}{atime}{1}{main1}
    \addplotErrorBar{res}{ftime}{1}{main2}
    \nextgroupplot[title = {$j=2$}]
    \addplotErrorBar{res}{atime}{2}{main1}
    \addplotErrorBar{res}{ftime}{2}{main2}
    \nextgroupplot[title = {$j=3$}]
    \addplotErrorBar{res}{atime}{3}{main1}
    \addplotErrorBar{res}{ftime}{3}{main2}
    \end{groupplot}
\end{tikzpicture}}
    \caption{}
    \end{subfigure}
    \caption{Comparison of the analytic (pink) and numeric (blue) derivative runtimes for the (a) mesh size and (b) resolution tests. All rasters are computed for a square cube, and resolution is per dimension.}
    \label{fig:performance}
\end{figure*}
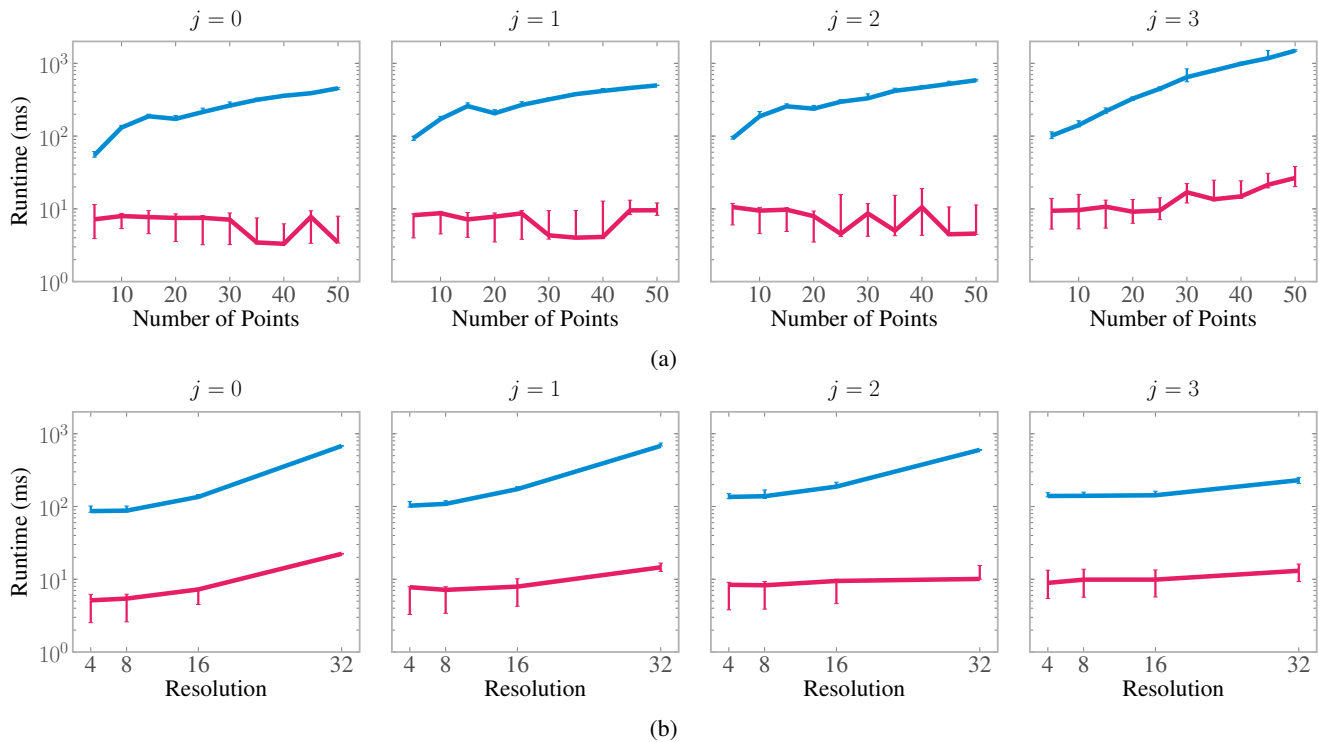

\begin{lemma}[Derivative of the summation term]
\label{lem:dSdxp}
Let $S_t$ be one term in the summation term $S$:
\begin{equation}
    S_t:=\frac{e^{-i\sigma_t}}{\prod_{l=1,l\neq{t}}^{j+1}(\sigma_t-\sigma_l)}
    \label{eq:dSdxp}
\end{equation}
The derivative of the summation term with respect to $\bm{x}_p$ is
\begin{equation}
    \frac{\partial S}{\partial\bm{x}_p}=\left(-i S_p+\sum_{t=1,t\neq p}^{j+1}\frac{S_t+S_p}{\sigma_t - \sigma_p}\right)\bm{k}
    \label{eq:dSdxp}
\end{equation}
where $\bm{k}$ is the spectral domain coordinate vector.
\end{lemma}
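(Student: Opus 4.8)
The plan is to differentiate $S=\sum_{t=1}^{j+1}S_t$ term by term, using the observation that the phase $\sigma_t=\bm{k}\cdot\bm{x}_t$ depends only on the single vertex $\bm{x}_t$, so that $\partial\sigma_t/\partial\bm{x}_p=\delta_{tp}\,\bm{k}$ with $\delta_{tp}$ the Kronecker delta. By the chain rule this reduces the vector-valued quantity $\partial S/\partial\bm{x}_p$ to the scalar derivatives $\partial S_t/\partial\sigma_p$, each multiplied by the common vector $\bm{k}$; everything then comes down to computing $\partial S_t/\partial\sigma_p$ for every $t$ and reassembling the pieces.

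I would split the computation into two cases according to whether $\sigma_p$ sits in the numerator or the denominator of $S_t$. When $t=p$, the variable $\sigma_p$ appears both in $e^{-i\sigma_p}$ and in every factor of $\prod_{l\neq p}(\sigma_p-\sigma_l)$; logarithmic differentiation of the product gives $\partial_{\sigma_p}\log\prod_{l\neq p}(\sigma_p-\sigma_l)=\sum_{l\neq p}(\sigma_p-\sigma_l)^{-1}$, so $\partial S_p/\partial\sigma_p=-iS_p-S_p\sum_{l\neq p}(\sigma_p-\sigma_l)^{-1}=-iS_p+\sum_{t\neq p}S_p/(\sigma_t-\sigma_p)$. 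When $t\neq p$, the only $\sigma_p$-dependence in $S_t$ is through the single denominator factor $(\sigma_t-\sigma_p)$, whose $\sigma_p$-derivative is $-1$, giving $\partial S_t/\partial\sigma_p=S_t/(\sigma_t-\sigma_p)$.

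Collecting terms, $\partial S/\partial\bm{x}_p=\bigl(\partial_{\sigma_p}S_p+\sum_{t\neq p}\partial_{\sigma_p}S_t\bigr)\bm{k}=\bigl(-iS_p+\sum_{t\neq p}(S_p+S_t)/(\sigma_t-\sigma_p)\bigr)\bm{k}$, which is exactly the claimed identity. I do not anticipate a genuine obstacle: the argument is elementary calculus, and the only thing requiring care is the bookkeeping — correctly identifying which factors of each product carry a $\sigma_p$-dependence in the two cases, tracking the sign from $\partial(\sigma_t-\sigma_p)/\partial\sigma_p=-1$, and merging the $t=p$ contribution with the $t\neq p$ sum so that the separate $S_p$ pieces recombine into the symmetric numerator $S_p+S_t$. (Implicitly the $\sigma_t$ are taken pairwise distinct, i.e.\ $\bm{k}$ generic, so that each $S_t$ and its derivative are well defined; coincident phases are handled by the same limiting argument used for the forward pass.)
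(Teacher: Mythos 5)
Your proposal is correct and follows essentially the same route as the paper's proof: term-by-term differentiation of $S$ with the case split $t=p$ versus $t\neq p$, followed by recombining the $S_p$ contributions with the $t\neq p$ sum to form $(S_t+S_p)/(\sigma_t-\sigma_p)$. Your use of logarithmic differentiation for the $t=p$ case is just a compact rephrasing of the paper's quotient-rule computation, and your closing remark on pairwise-distinct $\sigma_t$ is a sensible (implicit in the paper) genericity caveat.
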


\begin{proposition}[Backward pass]
Following from Lemmas \ref{lem:dgammadxp} and \ref{lem:dSdxp}, the derivative of $F_n^j(\bm{k})$ with respect to a point $\bm{x}_p$ in the simplex element $n$ is
\begin{equation}
    \frac{\partial F_n^j(\bm{k})}{\partial\bm{x}_p}=\rho_ni^j\left(\Lambda\bm{k}+\Gamma\sum_{\substack{m=1\\m\neq{p}}}^{j+1}A_{pm}\bm{D}_{pm}\right)
    \label{eq:dFdxp}
\end{equation}
where $A_{pm}$ is the element in the $p$th row and $m$th column of $adj(\hat{B_n^j})$ starting at $p=0$ and $m=0$,
\begin{align}
    \Lambda:=&\gamma_n^j \left(-i S_p+\sum_{t=1,t\neq p}^{j+1}\frac{S_t+S_p}{\sigma_t - \sigma_p}\right)\\
    \Gamma :=& \frac{(-1)^{j+1}/2^j}{\gamma_n^j}S
\end{align}

\end{proposition}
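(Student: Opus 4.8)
The plan is to apply the product rule directly to the closed form of the forward pass, $F_n^j(\bm{k}) = \rho_n i^j \gamma_n^j S$, established in the Forward pass proposition. The per-simplex density $\rho_n$ is a prescribed quantity that does not depend on the vertex positions, and $i^j$ is a constant, so differentiating with respect to a vertex coordinate $\bm{x}_p$ touches only the two factors $\gamma_n^j$ and $S$:
$$\frac{\partial F_n^j(\bm{k})}{\partial\bm{x}_p} = \rho_n i^j\left( S\,\frac{\partial\gamma_n^j}{\partial\bm{x}_p} + \gamma_n^j\,\frac{\partial S}{\partial\bm{x}_p}\right).$$
I would emphasize this point explicitly so that no spurious term $i^j\gamma_n^j S\,\partial\rho_n/\partial\bm{x}_p$ appears, and so the factor $i^j$ is carried through unchanged.

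Next I would substitute the two lemmas. Lemma~\ref{lem:dgammadxp} gives $\partial\gamma_n^j/\partial\bm{x}_p = \frac{(-1)^{j+1}/2^j}{\gamma_n^j}\sum_{m\neq p}A_{pm}\bm{D}_{pm}$, so $S\,\partial\gamma_n^j/\partial\bm{x}_p = \Gamma\sum_{m\neq p}A_{pm}\bm{D}_{pm}$ with $\Gamma := \frac{(-1)^{j+1}/2^j}{\gamma_n^j}S$. Lemma~\ref{lem:dSdxp} gives $\partial S/\partial\bm{x}_p = \left(-iS_p + \sum_{t\neq p}\frac{S_t+S_p}{\sigma_t-\sigma_p}\right)\bm{k}$, so $\gamma_n^j\,\partial S/\partial\bm{x}_p = \Lambda\bm{k}$ with $\Lambda := \gamma_n^j\left(-iS_p + \sum_{t\neq p}\frac{S_t+S_p}{\sigma_t-\sigma_p}\right)$. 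Collecting the two contributions reproduces the claimed identity for $\partial F_n^j(\bm{k})/\partial\bm{x}_p$.

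The only genuine bookkeeping point is the indexing of $A_{pm}$. In Lemma~\ref{lem:dgammadxp}, $A_{pm}$ is the entry in the $(p+1)$th row and $(m+1)$th column of $adj(\hat B_n^j)$, the offset arising because $\hat B_n^j$ carries a bordering row and column of zeros and ones; in the proposition statement the same entry is re-indexed to start at $p=0,m=0$. I would note that these refer to the identical matrix element, and that the sums over $m$ (in Lemma~\ref{lem:dgammadxp}) and over $t$ (in Lemma~\ref{lem:dSdxp}) both range over the $j+1$ vertices of simplex $n$, excluding $p$. No convergence or regularity questions arise, since every expression is a finite sum of smooth functions of $\bm{x}_p$ away from the degenerate configurations $\sigma_t=\sigma_p$ or $\gamma_n^j=0$.

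In short, there is essentially no obstacle in this proposition itself: the analytic work is entirely carried by Lemmas~\ref{lem:dgammadxp} and~\ref{lem:dSdxp}, and the statement follows from a single application of the product rule plus the index identification above. I would close by remarking that summing (\ref{eq:dFdxp}) over the simplices incident to a shared vertex, using linearity of the NUFT exactly as in the Forward pass, yields the gradient of the full mesh transform $F^j(\bm{k}) = \sum_n F_n^j(\bm{k})$ that is used in backpropagation.
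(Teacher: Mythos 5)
Your proposal is correct and follows the paper's own derivation exactly: a single application of the product rule to $F_n^j(\bm{k})=\rho_n i^j\gamma_n^j S$ followed by substitution of Lemmas~\ref{lem:dgammadxp} and~\ref{lem:dSdxp}, which is precisely the argument given in Sec.~\ref{assec:derivative}. Your added remarks on the re-indexing of $A_{pm}$ and on summing over simplices via linearity are sound but not needed beyond what the paper states.
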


We provide a detailed derivation of Eqn. \ref{eq:dFdxp} as well as proofs of Lemmas \ref{lem:dgammadxp} and \ref{lem:dSdxp} in Sec. \ref{assec:derivative} of the Appendix.

\subsection{Deep Learning Architectures and Pipelines}\label{ssec:dlarch}
We present the a schematic of the deep learning model-driven shape optimization (Sec. \ref{ssec:optim}) in Fig. \ref{fig:dlarch1}, and a schematic of the polygon segmentation network (PolygonNet) in Figs. \ref{fig:polygen} and \ref{fig:dlarch2}. A detailed description of the architectures is presented in Appendix \ref{asec:arch}.

\section{Experiments}\label{sec:exp}
\subsection{Performance Benchmarking}\label{ssec:perf}
We compare the runtime of our implementation of the backward pass over the DDSL with that of the numeric derivatives calculated using the finite difference method. 

\paragraph{Experiment Setup} We perform tests for the 0-, 1-, 2-, and 3-simplex meshes in 3-dimensional space and examine the effects of mesh size (number of points in the mesh) and image resolution.
We test mesh sizes ranging from 5 to 50 points and resolutions ranging from 4 to 32, and we run each test 100 times to acquire a distribution of data.
For each run, we randomly generate a 3-dimensional simplex mesh of varied simplex degrees, varied densities, with random gradient values on each raster pixel.
We then calculate the analytic and numeric derivatives for the DDSL using our implementation of Eqn. \ref{eq:dFdxp} and the finite difference method, respectively, and time each calculation.

\paragraph{Analysis of complexity} Since the analytic finite difference backward pass for computing the gradients using Eqn. \ref{eq:dFdxp} requires computing each pair of spectral coefficient and each vertex in a $j$-simplex, the computational complexity for the finite difference backward pass is the same as the forward pass, $\mathcal{O}((j+1)n_e m)$, for a mesh of $n_e$ simplices and a raster of $m$ degrees of freedom. Finite difference, on the other hand, requires $n_v$ forward computations, each of complexity $\mathcal{O}((j+1)n_e m)$. Assuming $n_v \propto n_e$, the Finite Difference evaluation is of complexity $\mathcal{O}((j+1)n_e^2 m)$.

\paragraph{Results} The results of our mesh size and resolution runtime tests are shown in Fig. \ref{fig:performance}. 
In both tests and for all $j$-simplices, our implementation of the analytic derivative consistently outperforms the numerical method for calculating the derivative by $10\sim 100\times$ in the range we tested.

\subsection{Shape Optimization}\label{ssec:optim}
We demonstrate the utility of the DDSL through the task of shape optimization. 
Since many physical characteristics depend on shape, shape optimization is an important and challenging task across many fields of science and engineering. 
We show that the DDSL allows us to accomplish this shape optimization task due to the analytic nature of its derivative.

\paragraph{General Experiment Setup} 
We pre-process each shape into a polygon of the shape's boundary. 
The polygons are rasterized using the DDSL. 
We train neural networks on the raster images, and we use the gradients out of these neural networks for the shape optimization task.

Using gradient descent, we optimize a shape to a prescribed target value, which can be a shape classification or a physical quantity. 
Since we implemented the DDSL as a differentiable neural network layer, we can obtain the gradient of the target value with respect to the original shape directly from the neural network.
Rather than directly manipulating vertices, we further propagate this gradient to control points attached to the original shape for enhanced robustness. 
Each control point has 3 degrees of freedom: translation in the $x$ and $y$ directions, and rotation about the point.
More details about the control points are given in Sec. \ref{assec:controlpoints}.
We iterate the shape optimization process until the loss converges to zero. 

\begin{figure}[t]
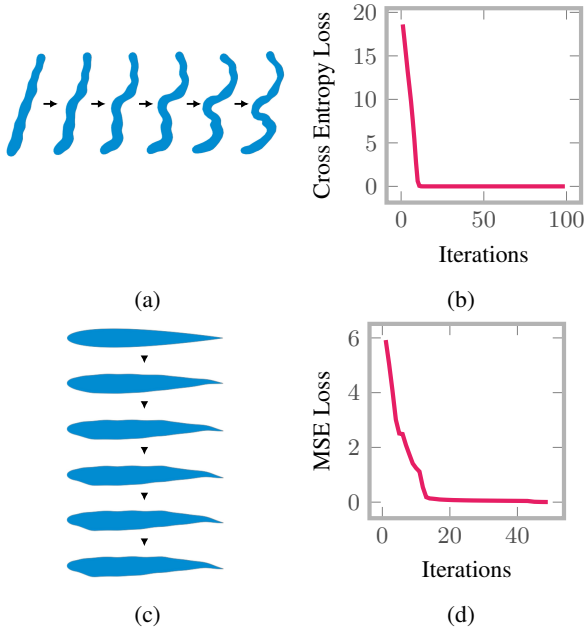
 
    \begin{subfigure}[b]{0.5\linewidth}
    \centering
    \raisebox{4em}{\input{optim1to3/optim1to3_fig.tex} }  
    \caption{}
    \label{MNIST_shapes}
    \end{subfigure}%
    \begin{subfigure}[b]{0.5\linewidth}
    \begin{tikzpicture}[font=\small]
    \begin{groupplot}[
    width = \textwidth,
    height = \textwidth,
    scale=1,
    group style = {
        group size = 1 by 1,
        xlabels at = edge bottom,
        ylabels at = edge left,
        xticklabels at = edge bottom,
        yticklabels at = edge left},
    xlabel = Iterations,
    ylabel = Cross Entropy Loss,
    every outer x axis line/.append style = {black!30, ultra thick},
    every outer y axis line/.append style = {black!30, ultra thick},
    every tick label/.append style = {black!70}]
    \nextgroupplot[]
    \addplotLoss{optim1to3}{iteration}{loss}{main1}
    \end{groupplot}
\end{tikzpicture}
    \caption{}
    \label{fig:MNIST_loss}
    \end{subfigure}
    \begin{subfigure}[b]{0.5\linewidth}
    \centering
    \input{optim_airfoil/optim_airfoil_fig.tex}
    \caption{}
    \label{airfoil_shapes}
    \end{subfigure}%
    \begin{subfigure}[b]{0.5\linewidth}
    \begin{tikzpicture}[font=\small]
    \begin{groupplot}[
    width = \textwidth,
    height = \textwidth,
    scale=1,
    group style = {
        group size = 1 by 1,
        xlabels at = edge bottom,
        ylabels at = edge left,
        xticklabels at = edge bottom,
        yticklabels at = edge left},
    xlabel = Iterations,
    ylabel = MSE Loss,
    every outer x axis line/.append style = {black!30, ultra thick},
    every outer y axis line/.append style = {black!30, ultra thick},
    every tick label/.append style = {black!70}]
    \nextgroupplot[]
    \addplotLoss{optim_airfoil}{iteration}{loss}{main1}
    \end{groupplot}
\end{tikzpicture}
    \caption{}
    \label{fig:airfoil_loss}
    \end{subfigure}
\caption{Optimization of (a), (b): a `1' from the MNIST dataset to a `3' by minimizing the cross-entropy between the input and the target class. (c), (d): the NACA 0012 airfoil (with an original lift-drag ratio of 0) to a lift-drag ratio of 95.9. The airfoil is set at an angle of attack of zero, and the Reynolds number is set to $1 \times 10^6$.}
\label{fig:optim}
\end{figure}

\paragraph{MNIST}
We first demonstrate shape optimization using the DDSL with the MNIST dataset of handwritten digits.
Rather than using the traditional pixel images, we use polygons of the digits as inputs. The polygon form of MNIST digits can be acquired by contouring the original images. The objective of this experiment is to optimize a digit in the MNIST dataset to a target digit.

\paragraph{Airfoils}
We further illustrate the functionality of the DDSL with the more practical task of aerodynamic shape optimization.
For this experiment, we optimize an airfoil to a prescribed lift-drag ratio, which is related to the efficiency of an aerodynamic body.
We use the \url{airfoiltools.com} database of consisting of 1,636 airfoils of aircraft wings and turbine blades, along with precomputed physical quantities such as drag and lift coefficients at different angles of attack and Reynolds numbers, acquired from CFD simulations.
Airfoils are originally represented as polygons and rasterized using the DDSL.
We then train a neural network to predict lift-drag ratios of airfoils at specific angles of attack and Reynolds numbers and use this neural network for the shape optimization task.
When optimizing the airfoil shape, we specify the angle of attack of the airfoil and the Reynolds number of the flow.

\begin{figure*}[ht!] 
\centering
\input{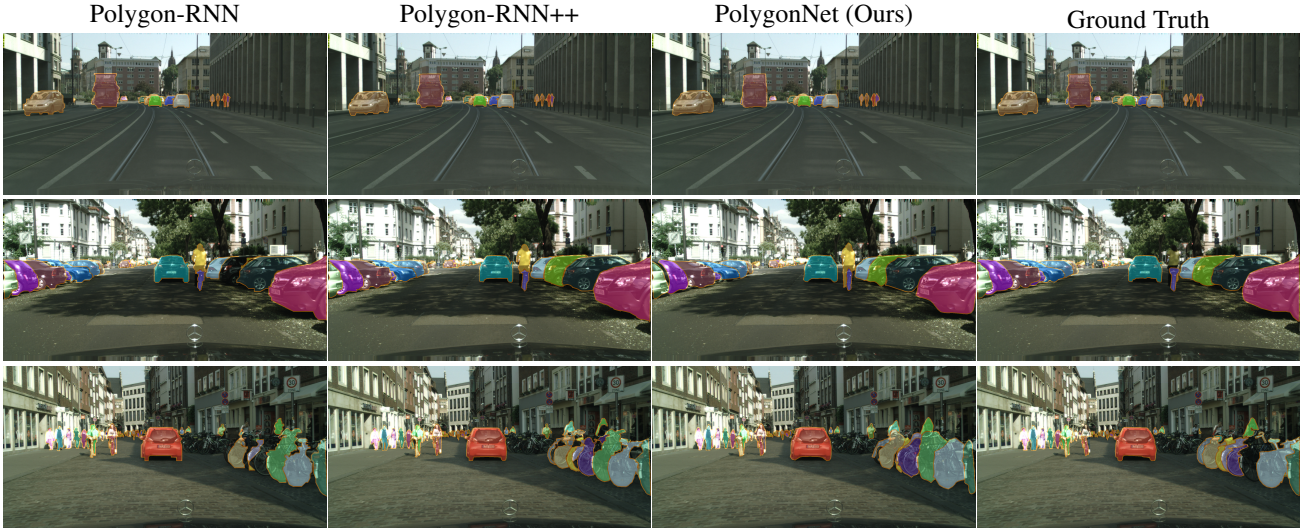}
\caption{Visualization of image segmentation results. Ground-truth bounding boxes are given for all models to create image crops as inputs to the networks.}
\label{fig:visualize}
\end{figure*}

\begin{table*}[ht!]
    \centering
    \begin{tabular}{l|c|c|c|c|c|c|c|c|c}
        \toprule
         Model & Bicycle & Bus & Person & Train & Truck & \scalebox{.7}[1.0]{Motorcycle} & Car & Rider & Mean \\
         \midrule
         SquareBox \scalebox{.8}[1.0]{\cite{castrejon2017annotating}}     & 35.41 & 53.44 & 26.36 & 39.34 & 54.75 & 39.47 & 46.04 & 26.09 & 40.11 \\
         Dilation10 \scalebox{.8}[1.0]{\cite{YuKoltun2016}} & 46.80 & 48.35 & 49.37 & 44.18 & 35.71 & 26.97 & 61.49 & 38.21 & 43.89 \\
         DeepMask  \scalebox{.8}[1.0]{\cite{pinheiro2015learning}}      & 47.19 & 69.82 & 47.93 & 62.20 & 63.15 & 47.47 & 61.64 & 52.20 & 56.45 \\
         SharpMask \scalebox{.8}[1.0]{\cite{pinheiro2016learning}}      & 52.08 & 73.02 & 53.63 & 64.06 & 65.49 & 51.92 & 65.17 & 56.32 & 60.21 \\
         Polygon-RNN \scalebox{.8}[1.0]{\cite{castrejon2017annotating}} & 52.13 & 69.53 & 63.94 & 53.74 & 68.03 & 52.07 & 71.17 & 60.58 & 61.40 \\
         Polygon-RNN++ \scalebox{.8}[1.0]{\cite{acuna2018efficient}}   & \textbf{63.06} & 81.38 & \textbf{72.41} & 64.28 & \textbf{78.90} & 62.01 & \textbf{79.08} & \textbf{69.95} & 71.38 \\
         \midrule
         PolygonNet (Ours)   & 62.26 & \textbf{84.38} & 68.62 & \textbf{82.42} & 76.57 & \textbf{63.57} & 78.08 & 64.10 & \textbf{72.50} \\
         \bottomrule
    \end{tabular}
    \caption{Comparison of Cityscape image segmentation IoU against baseline algorithms on test set. }
    \label{tab:segresults}
\end{table*}

\paragraph{Results}
We show some iterations of the shape optimization process for the MNIST and airfoil experiments as well as graphs showing the loss over each iteration in Figs. \ref{fig:optim}, respectively.
The success of the DDSL in the shape optimization task is most intuitively clear in the MNIST experiment, where the original digit, `1,' is transformed into a `3.' 
In the airfoil experiment, the lift-drag ratio increased, as desired.
The optimized shape is an airfoil with its trailing edge deflected downwards, resembling an aircraft deploying its flaps at takeoff to increase lift.
Both experiments exhibit a monotonic decrease in loss, which converges to zero, confirming that optimization was achieved.

\subsection{Segmentation Mask Generation}\label{ssec:seg}
To further illustrate applications of the DDSL layer in deep learning applications, we experiment on the task of image segmentation by generating polygonal masks. In contrast to conventional segmentation frameworks that output pixel masks, directly predicting polygons allows for a more efficient and flexible output structure, and has been shown to be effective in assisting human annotators in labeling new datasets \cite{castrejon2017annotating, acuna2018efficient}.   

\paragraph{Experiment Setup} For direct comparison with state-of-the-art, we follow the experiment setup of \cite{castrejon2017annotating} and \cite{acuna2018efficient} for predicting polygonal masks. In contrast to the conventional setup of instance segmentation, we assume crops of input images given ground-truth bounding boxes, and we output the corresponding polygonal masks using our neural network. Following the two studies, we train and test our model on the Cityscapes dataset \cite{cordts2016cityscapes}. The Cityscapes dataset is one of the most comprehensive benchmarks for instance segmentation, containing 2975 training, 500 validation, and 1525 test images labeled with 8 semantic classes.  We follow the two studies for an alternative split of the original dataset, since the original test images do not provide ground-truth instances. The new partitions consists of 40174 / 3448 / 8440 image crops of train/validation/test sets, each of size $224\times 224$. 

\paragraph{Training} We use two losses for training the model, a multi-resolution rasterization loss, and a smoothness loss. The losses are defined as:
\begin{align}
    \mathcal{L}_{\text{mres}} &= \sum_{i,res} ||D_{res}(G_{\theta}^{(i)}(x)) - D_{res}(y)||_1 \\
    &i\in \{0, 1, 2, 3\}, res \in 
    \{224, 112, 56, 28\} \nonumber\\
    \mathcal{L}_{\text{smooth}} &= \frac{1}{n} \sum_{j}^n (\frac{A_j(G_{\theta}^{(3)}(x))}{\pi} - 1)^2 \\
    \mathcal{L} &= \mathcal{L}_{\text{mres}}+\lambda \mathcal{L}_{\text{smooth}}
\end{align}
where $D_{res}$ is DDSL rasterization at resolution $res$, $G_{\theta}^{(i)}$ is the polygon output from the polygon generator network parameterized by $\theta$, up to level $i$, $x$ and $y$ are the input images and the ground-truth polygons, $A_j$ is the $j$-th angel of the polygon, and $\lambda$ is the smoothness penalty term.
We train the model (see Fig. \ref{fig:dlarch2}) end-to-end using the loss defined above. We weight the loss of each class inversely proportional to the label frequencies in the training set. See more details in Appendix \ref{assec:seg}.
\paragraph{Results} We evaluate our model against state-of-the-art models and detail the results in Table \ref{tab:segresults}, where we evaluate runtime on a single Titan X (Pascal) GPU. We provide a visual comparison in Fig. \ref{fig:visualize}. Our model surpasses state of the art for class-averaged IoU. In particular, the simplicity of our network architecture is highlighted in Table~\ref{tab:segtime}. While Polygon-RNN++ was unable to propagate gradients through IoU scores, it uses IoU as a reward to an additional reinforcement learning model, which adds additional complexities to the overall architecture. It also uses additional graph neural network to upsample and finetune the polygons. Due to the differentiable rasterization loss, our model uses a single CNN-based polygon generator. In comparison to Polygon-RNN++, our model achieves a ~100x speed-up with a quarter of the total model parameters. 

\begin{table}[ht!]
    \centering
    \begin{tabular}{l l l}
        \toprule
         Model & \# Params & Runtime (s) \\
         \midrule
         Polygon-RNN & 58M & $2.0332\pm 0.0168$ \\
         Polygon-RNN++ & 100M & $2.3241\pm 0.0181$ \\
         PolygonNet (Ours) & \textbf{24M} & $\textbf{0.0287}\pm 0.0022$ \\
         \bottomrule
    \end{tabular}
    \caption{Comparison of network parameters and evaluation time for a batch of 16 image crops.}
    \label{tab:segtime}
\end{table}
\section{Conclusion}\label{sec:conc}
We propose the DDSL as a differentiable simplex layer for neural networks. We present a unifying framework for differentiable rasterization of arbitrary geometrical signals represented on a simplicial complex. We further show two geometric applications of this method: we can effectively propagate gradients across the DDSL for shape optimization, and we can utilize the DDSL to construct a differentiable rasterization loss that allows for a simple, yet effective, polygon generating network that surpasses state of the art in segmentation IoU as well as runtime and parameter efficiency.

\section{Acknowledgements}
We would like to thank Thomas Funkhouser and Avneesh Sud for helpful discussions. We appreciate help from Ling Huan for providing code and data for benchmarking our results against PolygonRNN++. 
This work is supported by a TUM-IAS Rudolf M\"{o}{\ss}bauer Fellowship and the ERC Starting Grant Scan2CAD (804724).
\clearpage

{\small
\bibliographystyle{ieee_fullname.bst}
\bibliography{egbib}
}

\clearpage

\renewcommand\thesection{\Alph{section}}
\setcounter{page}{1}
\renewcommand\thesubsection{\thesection \arabic{subsection}}
\noindent{\Large \textbf{Appendix}}
\setcounter{section}{0}
\label{sec:appendix}

\vspace{.5em}
In the appendix we provide additional details for deriving the derivative of the NUFT process as well as control point methods (Sec. \ref{asec:math}), network architecture and training details (Sec. \ref{asec:arch}). In Sec. \ref{asec:efficiency} we provide additional computational performance benchmarks for the DDSL layer. In Sec. \ref{asec:3dapp} we showcase additional applications of the DDSL towards 3D applications besided the 2D examples in the main paper. In Sec. \ref{asec:3dvis} we provide additional visualizations for the DDSL rasterization of 3D meshes.

\section{Mathematical Derivations}\label{asec:math}
\subsection{NUFT Derivative Derivation}\label{assec:derivative}
\begin{proof}[Proof of Lemma \ref{lem:dgammadxp}]
    Using Jacobi's formula and chain rule,
    \begin{align}
        \frac{\partial \gamma_n^j}{\partial\bm{x}_p}&=\frac{(-1)^{j+1}}{2\sqrt{2^j(-1)^{j+1}det(\hat{B}_n^j)}}tr\left(adj(\hat{B}_n^j)\frac{\partial\hat{B}_n^j}{\partial \bm{x}_p}\right)\\
        &=\frac{(-1)^{j+1}/2^j}{2\gamma_n^j}\sum_{m=1}^{j+2}\sum_{n=1}^{j+2}\tilde{A}_{mn}\tilde{{D}}_{nm}
        \label{eq:jacobi-expanded}
    \end{align}
    where $\tilde{A}$ is $adj(\hat{B}_n^j)$ and $\tilde{{D}}$ is $\frac{\partial\hat{B}_n^j}{\partial \bm{x}_p}$. Since $\hat{B}_n^j$ is symmetric, its adjunctive and derivative with respect to $\bm{x}_p$ are also symmetric. The elements on the diagonal and the first row and column of $\tilde{D}$ are zero, since the elements in the same positions in $\hat{B}_n^j$ are constant. The elements not in the $(p+1)$th row or the $(p+1)$th column of $\tilde{D}$ are also zero, since the elements in these positions in $\hat{B}_n^j$ do not depend on $\bm{x}_p$. Thus,
    \begin{equation}
        \tilde{D}=\begin{bmatrix}
        0 & \dots & 0 & 0 & 0 & \dots\\
        \vdots & \ddots & \vdots  & \vdots & \vdots \\
        0 & \dots & 0 & \tilde{D}_{p,p+1} & 0 & \dots\\
        0 & \dots & \tilde{D}_{p+1,p} & 0 & \tilde{D}_{p+1,p+2} & \dots \\
        0 & \dots & 0 & \tilde{D}_{p+2,p+2} & 0 & \dots \\
        \vdots & & \vdots & \vdots & \vdots & \ddots 
        \end{bmatrix}
    \end{equation}
    
    Each nonzero element of $\tilde{D}$ is computed as follows:
    \begin{align}
        \tilde{D}_{p+1,n}&=\frac{\partial d_{p,n-1}^2}{\partial \bm{x}_p}=2(\bm{x}_p-\bm{x}_{n-1})\\
        \tilde{D}_{m,p+1}&=\frac{\partial d_{m-1,p}^2}{\partial \bm{x}_p}=2(\bm{x}_p-\bm{x}_{m-1})
    \end{align}

    It follows that the double summation term in Eqn. \ref{eq:jacobi-expanded} simplifies to
    \begin{equation}
        \sum_{m=1}^{j+2}\sum_{n=1}^{j+2}\tilde{A}_{mn}\tilde{{D}}_{nm}=2\sum_{\substack{m=2\\m\neq{p+1}}}^{j+2}\tilde{A}_{p+1,m}\tilde{D}_{p+1,m}
        \label{eq:AD}
    \end{equation}
    
    For clarity and ease of implementation, we modify the indexing in Eqn. \ref{eq:AD} and the derivative of the content distortion factor is finally
    \begin{equation}
        \frac{\partial \gamma_n^j}{\partial\bm{x}_p}=\frac{(-1)^{j+1}/2^j}{\gamma_n^j}\sum_{\substack{m=1\\m\neq{p}}}^{j+1}A_{pm}\bm{D}_{pm}
    \end{equation}
\vspace{-1em}
\end{proof}

\begin{proof}[Proof of Lemma \ref{lem:dSdxp}]
    By the sum rule,
    \begin{equation}
        \frac{\partial S}{\partial\bm{x}_p}=\sum_{t=1}^{j+1}\frac{\partial S_t}{\partial\bm{x}_p}
    \end{equation}
    We examine two cases, when $t=p$ and when $t\neq{p}$. For $t=p$,
    \begin{align}
        \frac{\partial S_t}{\partial \bm{x}_p}=&\frac{1}{\left(\prod_{l=1,l\neq{p}}^{j+1}(\sigma_p-\sigma_l)\right)^2}\bm{k}\nonumber\\
        &\left[\left(\prod_{l=1,l\neq{p}}^{j+1}(\sigma_p-\sigma_l)\right)\left(-ie^{-i\sigma_p}\right)\right.\nonumber\\
        &\left.+e^{-i\sigma_p}\left(\frac{\partial}{\partial \bm{x}_p}\left(\prod_{l=1,l\neq{p}}^{j+1}(\sigma_p-\sigma_l)\right)\right)\right]\\
        =&-\frac{e^{-i\sigma_p}}{\prod_{l=1,l\neq{p}}^{j+1}(\sigma_p-\sigma_l)}\left(i+\sum_{q=1,q\neq{p}}^{j+1}\frac{1}{\sigma_p-\sigma_q}\right)\bm{k}
    \end{align}
    
    For $t\neq{p}$,
    \begin{align}
        \frac{\partial S_t}{\partial \bm{x}_p}=&\frac{\partial}{\partial \bm{x}_p}\left(\frac{e^{-i\sigma_t}}{(\sigma_t-\sigma_1)...(\sigma_t-\sigma_p)...(\sigma_t-\sigma_{j+1})}\right)\\
        =&\left(\frac{e^{-i\sigma_t}}{\splitfrac{(\sigma_t-\sigma_1)...(\sigma_t-\sigma_{p-1})(\sigma_t-\sigma_{p+1})}{...(\sigma_t-\sigma_{j+1})}}\right)\nonumber\\
        &\left(\frac{\partial}{\partial \bm{x}_p}\left(\frac{1}{\sigma_t-\sigma_p}\right)\right)\\
        =&\left(\frac{e^{-i\sigma_t}}{\splitfrac{(\sigma_t-\sigma_1)...(\sigma_t-\sigma_{p-1})(\sigma_t-\sigma_{p+1})}{...(\sigma_t-\sigma_{j+1})}}\right)\nonumber\\
        &\left(\frac{1}{(\sigma_t-\sigma_p)^2}\bm{k}\right)\\
        =&\frac{e^{-i\sigma_t}}{\prod_{l=1,l\neq{t}}^{j+1}(\sigma_t-\sigma_l)}\left(\frac{1}{\sigma_t-\sigma_p}\right)\bm{k}
    \end{align}
    
    Thus,
    \begin{align}
        \frac{\partial S}{\partial\bm{x}_p}=&\sum_{t=1}^{j+1}\frac{\partial S_t}{\partial\bm{x}_p}\\
        =&\left(\sum_{t=1,t\neq{p}}^{j+1}\left(\frac{e^{-i\sigma_t}}{\prod_{l=1,l\neq{t}}^{j+1}(\sigma_t-\sigma_l)}\left(\frac{1}{\sigma_t-\sigma_p}\right)\right)\right. \nonumber \\
        &\left.-\frac{e^{-i\sigma_p}}{\prod_{l=1,l\neq{p}}^{j+1}(\sigma_p-\sigma_l)}\left(i+\sum_{q=1,q\neq{p}}^{j+1}\frac{1}{\sigma_p-\sigma_q}\right)\right)\bm{k}\\
        =&\left(-i\frac{e^{-i\sigma_p}}{\prod_{l=1,l\neq p}^{j+1}(\sigma_p - \sigma_l)}+\sum_{t=1,t\neq p}^{j+1}\frac{1}{\sigma_t - \sigma_p}\right.\nonumber\\
        &\left.\left[\frac{e^{-i\sigma_t}}{\prod_{l=1,l\neq t}^{j+1}(\sigma_t - \sigma_l)}+\frac{e^{-i\sigma_p}}{\prod_{l=1,l\neq p}^{j+1}(\sigma_p - \sigma_l)}\right]\right)\bm{k}\\
        =&\left(-i S_p+\sum_{t=1,t\neq p}^{j+1}\frac{S_t+S_p}{\sigma_t - \sigma_p}\right)\bm{k}
    \end{align}
\end{proof}

\begin{proof}[Derivation of Eqn. \ref{eq:dFdxp}]
    Using the product rule,
    \begin{equation}
        \frac{\partial F_n^j(\bm{k})}{\partial\bm{x}_p}=\rho_ni^j\left(\frac{\partial \gamma_n^j}{\partial\bm{x}_p}S+\frac{\partial S}{\partial\bm{x}_p}\gamma_n^j\right)
        \label{eq:product}
    \end{equation}
    We obtain Eqn. \ref{eq:dFdxp} by substituting Eqns. \ref{eq:dgammadxp} and \ref{eq:dSdxp} into Eqn. \ref{eq:product}.
\end{proof} 

\subsection{Control Points}\label{assec:controlpoints}
We use linear blend skinning to control mesh deformation using control points. The new position of a point $\bm{v}'$ on the shape is computed as the weighted sum of handle transformations applied to its rest position $\bm{v}$:
\begin{align*}
    \bm{v}'=\sum_{j=1}^{m}w_j(\bm{v})\bm{T}_j\begin{pmatrix}\bm{v} \\ 1\end{pmatrix}
\end{align*}
Where $\bm{T}_j$ is the transformation matrix for the $j$-th control point, $w_j(\bm{v})$ is the normalized weight on vertex $\bm{v}$ corresponding to control point $j$. The transformation is represented in homogeneous coordinates, hence the extra dimension.

Consider control points with 3 degrees of freedom: $(t_x, t_y, \theta)$ where $t_x$ and $t_y$ represent translations in $x$ and $y$ and $\theta$ represents rotation around that control point. Hence we have
\begin{align*}
    \begin{cases}
    v_x' =& \sum_{j=1}^{N}w_j(\bm{v})\big(\cos(\theta_j - \tilde{\theta_j})v_x - \sin(\theta_j - \tilde{\theta_j}) v_y \\
    &- \cos(\theta_j - \tilde{\theta_j})c_x+\sin(\theta_j - \tilde{\theta_j})c_y\\
    &+c_x+v_x+t_x\big)\\
    v_y' =& \sum_{j=1}^{N}w_j(\bm{v})\big(\sin(\theta_j - \tilde{\theta_j})v_x + \cos(\theta_j - \tilde{\theta_j}) v_y \\
    &- \sin(\theta_j - \tilde{\theta_j})c_x-\cos(\theta_j - \tilde{\theta_j})c_y\\
    &+c_y+v_y+t_y\big)
    \end{cases}
\end{align*}
Where $\tilde{\theta_j}$ is the original orientation of the control points. It does not matter since we will be taking the derivatives with respect to $\theta$, and $\tilde{\theta_j}$ terms will disappear. The jacobian of $\bm{v}$ with respect to the three degrees of freedom is:
\begin{align*}
    \bm{J} &= \left[\frac{\partial \bm{v}}{\partial t_x}, \frac{\partial \bm{v}}{\partial t_y}, \frac{\partial \bm{v}}{\partial \theta}\right]\nonumber\\
    &= 
    \begin{bmatrix}
    w_j(\bm{v}) & 0 & w_j(\bm{v})(-v_y+c_y) \\
    0 & w_j(\bm{v}) & w_j(\bm{v})(v_x - c_x) \\
    \end{bmatrix}
\end{align*}

\section{Network Architecture and Training Details}\label{asec:arch}
In this section, we detail all the network architectures and training routines for the reader's reference.

\begin{table}[b]
    \centering
    \begin{tabular}{c p{14em}}
        \toprule
        Notation &  Meaning \\
        \midrule
        Conv(a, b, c, d) & Convolutional layer with $a$ input channels, $b$ output channels, kernel size $c$, and stride $d$. \\
        MaxPool(a) & Maximum Pooling with a kernel size of $a$. \\
        ReLU & Rectified Linear Unit activation function. \\
        FC(a, b) & Fully connected layer with $a$ input channels and $b$ output channels. \\
        ResNet-50(a) & ResNet-50 architecture with $a$ output channels. \\
        BN & Batch Normalization. \\
        \bottomrule
    \end{tabular}
    \caption{Network architecture notation list.}
    \label{tab:nnnotation}
\end{table}

\subsection{MNIST}\label{assec:mnist}
We use a standard LeNet-5 architecture with 3 convolutional layers and 2 fully connected layers.
\paragraph{Network Architecture}
The input is a 28x28 pixel image, which is normalized according to the mean and standard deviation of the entire dataset.
The network architecture is as follows:

Conv(1, 10, 5, 1) + MaxPool(2) + ReLU $\rightarrow$
Conv(10, 20, 5, 1) + Dropout + MaxPool(2) + ReLU $\rightarrow$
FC(320, 250) + ReLU $\rightarrow$
Dropout $\rightarrow$
FC(250, 10)

Total number of parameters: 88,040

\paragraph{Training Details}
We train the neural network with a batch size of 64 and an initial learning rate of $1\times10^{-2}$ with a decay of $0.5$ per 10 epochs.
We use the Stochastic Gradient Descent optimizer with a momentum of $0.5$ and a cross entropy loss.

\subsection{Airfoil}\label{assec:airfoil}
We use ResNet-50 \cite{he2016deep} followed by three fully connected layers to predict the lift-drag ratio on the airfoil.
\paragraph{Network Architecture}
The input is a 224x224 pixel image of the airfoil.
For each piece of data, we append the Reynolds number and angle of attack after ResNet-50 and before the fully connected layers.
The network architecture is as follows:

ResNet-50(1000) + BN + ReLU $\rightarrow$ 
append Reynolds number and angle of attack $\rightarrow$ 
FC(1002, 512) + BN + ReLU $\rightarrow$ 
FC(512, 64) + BN + ReLU $\rightarrow$ 
FC(64, 32) + BN 

Total number of parameters: 26,100,345

\paragraph{Training Details}
We train the neural network with a batch size of 240 and an initial learning rate of $1\times10^{-2}$ with a decay of $1\times10^{-1}$ per 20 epochs.
We use the Adam optimizer and a mean squared error loss.

\subsection{Polygon Image Segmentation}\label{assec:seg}
We present a novel polygon decoder architecture that is paired with a standard pre-trained ResNet50 as input. 

\paragraph{Network Architecture}
The model architecture is detailed in Fig. \ref{fig:dlarch2}. All ground-truth polygons are normalized to the range [0,1) corresponding to the relative positions within the bounding boxes. Using this network architecture, we first predict the three $(x,y)$ coordinates associated with the base triangle. Then, we progressively predict the offsets of the vertices in the next polygon hierarchy (See Fig. \ref{fig:polygen}). The resulting polygon is rasterized with the DDSL to compute the rasterization loss compared with the rasterized target. Smoothness loss can be directly computed based on the vertex positions and does not require rasterization. 

Total number of parameters: 24,274,426

\paragraph{Training Details}
We train the network end-to-end, with a batch size of 48, learning rate of $10^{-3}$ for 200 epochs. We use a smoothness penalty of $\lambda=1$. We use the Adam optimizer.

\section{Additional Computational Efficiency Tests}\label{asec:efficiency}
In addition to the computational speed benchmarks in Fig. \ref{fig:performance} highlighting the performance gain of analytic derivative computation over numerical derivatives, we perform additional tests for 2D and 3D computation speeds on more complex polygons and meshes to show the applicability of DDSL to 2D and 3D computer vision problems.

\begin{table}[h!]
    \centering
    \begin{tabular}{c|c|c|c|c|c}
        \toprule
        Res$^2$ & 16 & 32 & 64 & 128 & 256 \\
        \midrule
        Fwd Time (ms) & 2.30 & 1.88 & 2.48 & 5.02 & 20.13 \\
        \midrule
        Bwd Time (ms) & 4.33 & 3.80 & 5.93 & 16.69 & 59.15 \\
        \bottomrule
    \end{tabular}
    \vspace{-0.2cm}
    \caption{2D Computational speed (polygon w/ 250 edges).}
\end{table}
\vspace{-2em}
\begin{table}[h!]
    \vspace{1em}
    \begin{tabular}{c|c|c|c|c}
        \toprule
        Res$^3$ & 4 & 8 & 16 & 32 \\
        \midrule
        Fwd Time (ms) & 9.88 & 9.32 & 14.21 & 78.62 \\
        \midrule
        Bwd Time (ms) & 14.47 & 10.06 & 34.26 & 239.51 \\
        \bottomrule
    \end{tabular}
    \vspace{-0.2cm}
    \caption{3D Computational speed (tri-mesh w/ 1300 faces).}
    \label{tab:23d}
\end{table}

\section{3D Geometric Applications}\label{asec:3dapp}
To showcase the generalizabilty of the DDSL to 3D domain, we demonstrate its application in two separate 3D tasks that utilze the differentiablity of the simplex rasterization layer.

\subsection{3D Rotational Pose Estimation}
In Fig.~\ref{fig:pose}, we use DDSL to create a differentiable volumetric loss comparing current and target shapes, the gradients of which can be backpropagated to the pose. More specifically, we parameterize the rotational pose as a quaternion $\bm{q} = a+b\hat{\bm{i}}+c\hat{\bm{j}}+d\hat{\bm{k}}, \quad s.t. ||\bm{q}||_2 = 1$. The rasterization loss is defined as:
\begin{align*}
    \mathcal{L}(\bm{q}) = ||D_{32}(V(\bm{q})) - D_{32}(V_{tg})||_1
\end{align*}
where $D_{32}$ is the rasterization operator at resolution $32^3$ and $V_{tg}$ is the target mesh.

\begin{figure}[h!]
    \centering
    \includegraphics[width=.8\linewidth]{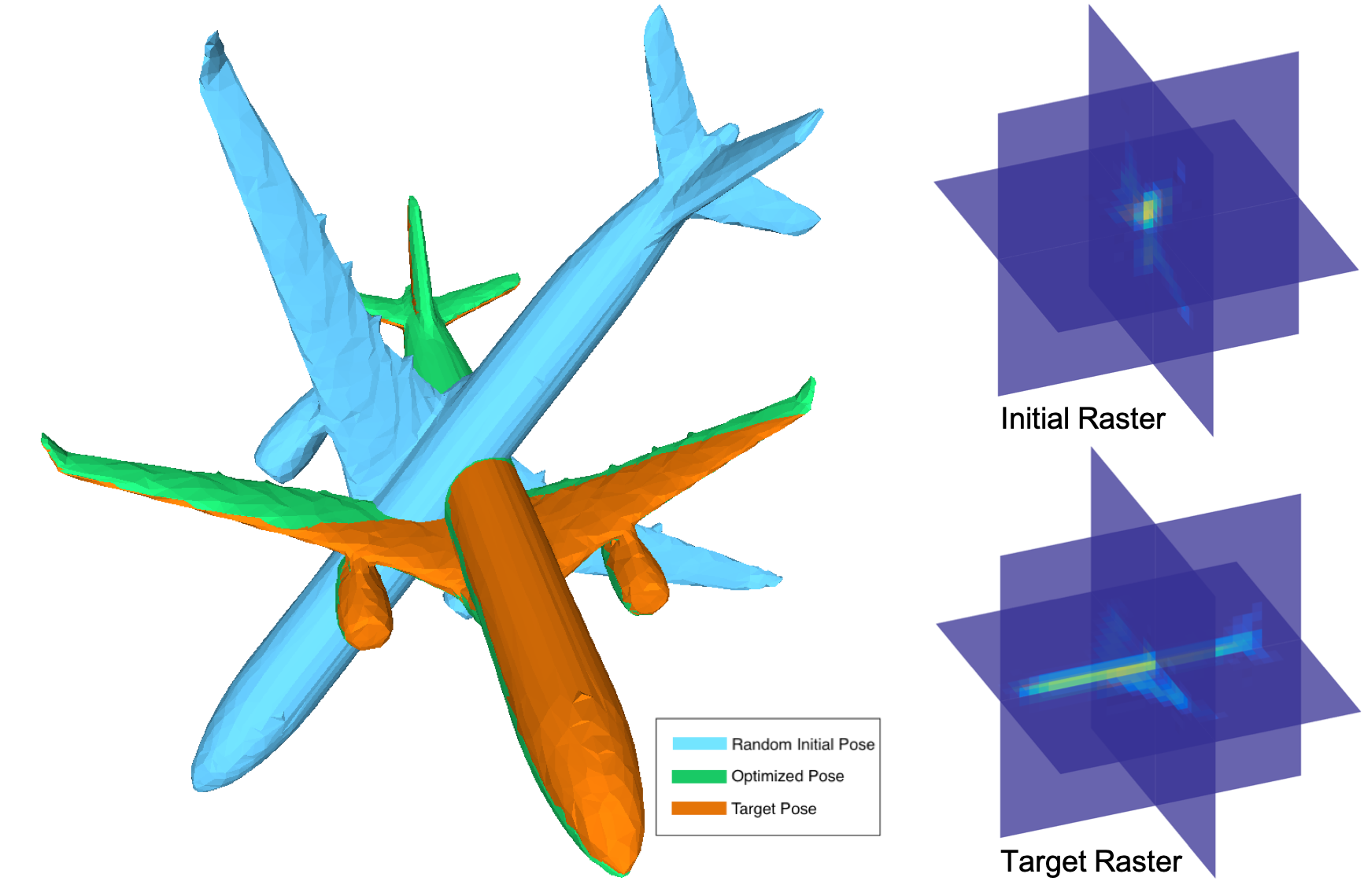}
    \caption{Mesh pose and rasters before and after opt.}
    \label{fig:pose}
\end{figure}

Although the volumetric rasterization loss is not a globally convex loss for pose alignment, with certain initialization of the target poss, the pose can be estimated by minimizing the DDSL rasterization loss.

\subsection{Single Image Mesh Estimation}
In Fig.~\ref{fig:vis}, we evaluate our method in the context of 3D deep learning. 
Our model consists of an image encoder from ResNet18, spherical convolutions \cite{jiang2018spherical} for generating a distortion map for a spherical mesh, and a loss function which is a weighted sum of DDSL rasterization loss (at $32^3$ resolution), Chamfer loss from point samples, Laplacian regularization loss, and Edge length regularization loss. We train on the airplane category in ShapeNet dataset, 
with (w/) and without (w/o) DDSL loss. 
We evaluate using accuracy, completeness, and chamfer distance metrics (see Tab. \ref{tab:eval}).

Since surface based Chamfer distance does not signal the network to produce consistently oriented surfaces and does not consistently enclose volume, it leads to incorrectly oriented surfaces. DDSL loss effective regularizes surface orientation based on the volume enclosed according to the surface orientations, and improves overall results.

\begin{table}[h]
    \begin{tabular}{l c c c}
        \toprule
        DDSL & Accuracy & Complete & Chamfer \\
        \midrule
        w/o  & 8.47 & 9.84 & 9.16 \\
        w/ & \textbf{2.15} & \textbf{1.83} & \textbf{1.99} \\
        \bottomrule
    \end{tabular}
    \caption{Evaluation resultsn($\times 10^-2$).}
    \label{tab:eval}
\end{table}

\begin{figure}[h]
    \centering
    \begin{subfigure}{.5\linewidth}
    \centering
    \includegraphics[width=\linewidth]{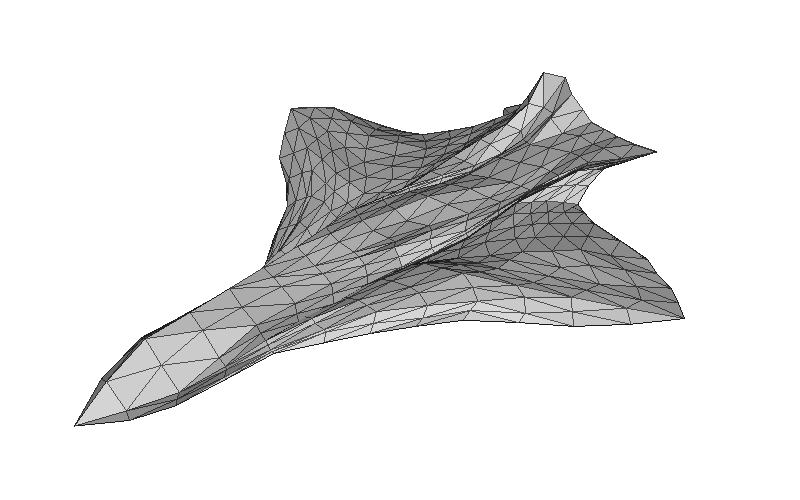}
    \caption{w/ DDSL}
    \end{subfigure}%
    \begin{subfigure}{.5\linewidth}
    \centering
    \includegraphics[width=\linewidth]{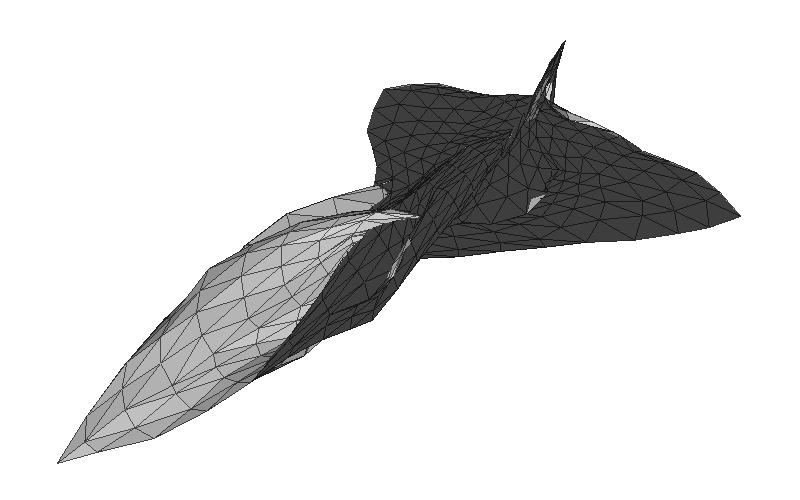}
    \caption{w/o DDSL}
    \end{subfigure}
    \caption{Qualitative visualization of generated samples.}
    \label{fig:vis}
\end{figure}

\section{Additional 3D Visualizations}\label{asec:3dvis}
We provide visualizations for rasterizing 3D shapes, rasterizing the enclosed volume as well as the surface mesh.
\begin{figure}[t]
\centering
\input{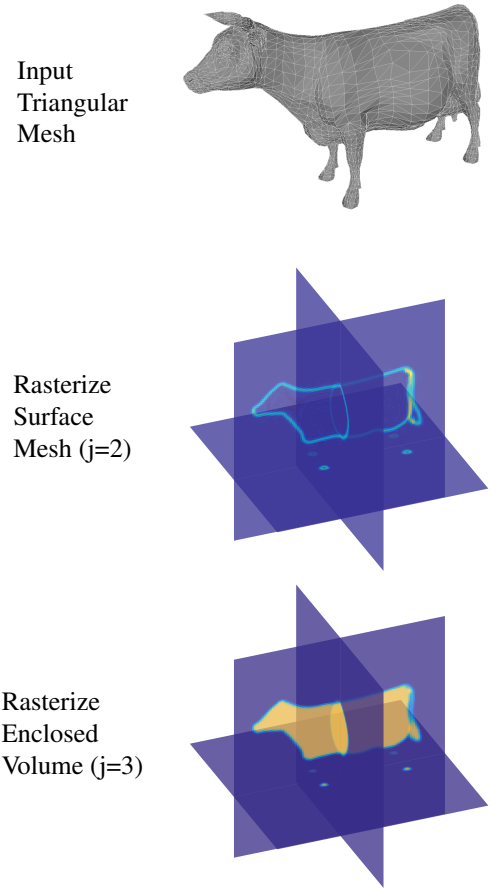}
\caption{In this example above, the input is a watertight triangluar mesh represented by vertices and faces. It can be rasterized in-situ in a 3-dimensional grid differentiably. The value is approximately 0 or 1 indicating signal densities.}
\end{figure}

\end{document}